\newtheorem{theorem}{Theorem}[section]
\newtheorem{lemma}[theorem]{Lemma}
\newtheorem{assumption}[theorem]{Assumption}
\titlespacing{\section}{2pt}{*+1}{*+1}
\titlespacing{\subsection}{2pt}{*+1}{*+1}
\titlespacing{\subsubsection}{2pt}{*+1}{*+1}
\title{RETRO: Reactive Trajectory Optimization for Real-Time \\ Robot Motion Planning in Dynamic Environments}
\author{Apan Dastider$^{1}$, Hao Fang$^{1}$ and Mingjie Lin$^{1}$
\thanks{$^{1}$Department of Electrical and Computer Engineering, University of Central Florida, Orlando, FL, 32816, USA (E-mail: milin at ucf.edu)}}
\begin{document}

\setlength{\textfloatsep}{10pt}

\maketitle
\thispagestyle{empty}
\cfoot{\thepage}
\renewcommand{\headrulewidth}{0pt}
\pagestyle{empty}
\cfoot{\thepage}
\setlength{\textfloatsep}{3pt}
\setlength{\floatsep}{3pt}

\begin{abstract}
Reactive trajectory optimization for robotics presents formidable challenges, demanding the rapid generation of purposeful robot motion in complex and swiftly changing dynamic environments. While much existing research predominantly addresses robotic motion planning with predefined objectives, emerging problems in robotic trajectory optimization frequently involve dynamically evolving objectives and stochastic motion dynamics. However, effectively addressing such reactive trajectory optimization challenges for robot manipulators proves difficult due to inefficient, high-dimensional trajectory representations and a lack of consideration for time optimization.

In response, we introduce a novel trajectory optimization framework called RETRO. RETRO employs adaptive optimization techniques that span both spatial and temporal dimensions. As a result, it achieves a remarkable computing complexity of $O(T^{2.4}) + O(Tn^{2})$, a significant improvement over the traditional application of DDP, which leads to a complexity of $O(n^{4})$ when reasonable time step sizes are used.
To evaluate RETRO's performance in terms of error, we conducted a comprehensive analysis of its regret bounds, comparing it to an Oracle value function obtained through an Oracle trajectory optimization algorithm. Our analytical findings demonstrate that RETRO's total regret can be upper-bounded by a function of the chosen time step size.
Moreover, our approach delivers smoothly optimized robot trajectories within the joint space, offering flexibility and adaptability for various tasks. It can seamlessly integrate task-specific requirements such as collision avoidance while maintaining real-time control rates. We validate the effectiveness of our framework through extensive simulations and real-world robot experiments in closed-loop manipulation scenarios.

For further details and supplementary materials, please visit:
\url{https://sites.google.com/view/retro-optimal-control/home}


\end{abstract}

\section{INTRODUCTION}
Reactive Trajectory Optimization (RETRO) is a distinct paradigm in robotic control, characterized by dynamic changes in system dynamics and optimization/control objectives over time. In this approach, the full set of system state data is not available simultaneously.
The progression of the control process is represented as a sequence of system control policies. The primary aim is to continuously and effectively refine the currently computed trajectory throughout this sequence while intelligently incorporating knowledge from past experiences.
To be more specific, the overarching objective of RETRO is to minimize a predefined cost function over an entire epoch of robotic control. This remains true even when making multiple adjustments or refinements to the presently computed control policy.
In essence, RETRO can be conceptualized as an online learning process that necessitates the fusion of knowledge from sequentially presented data streams over time.
Practically, RETRO also seeks to optimize memory usage, computational resources, and execution speed during the robotic trajectory optimization process, making it a comprehensive and efficient approach to robotic control in dynamic and evolving scenarios.


\begin{figure}[htbp]                                       
    \centering                                             
    \includegraphics[width=\linewidth]{./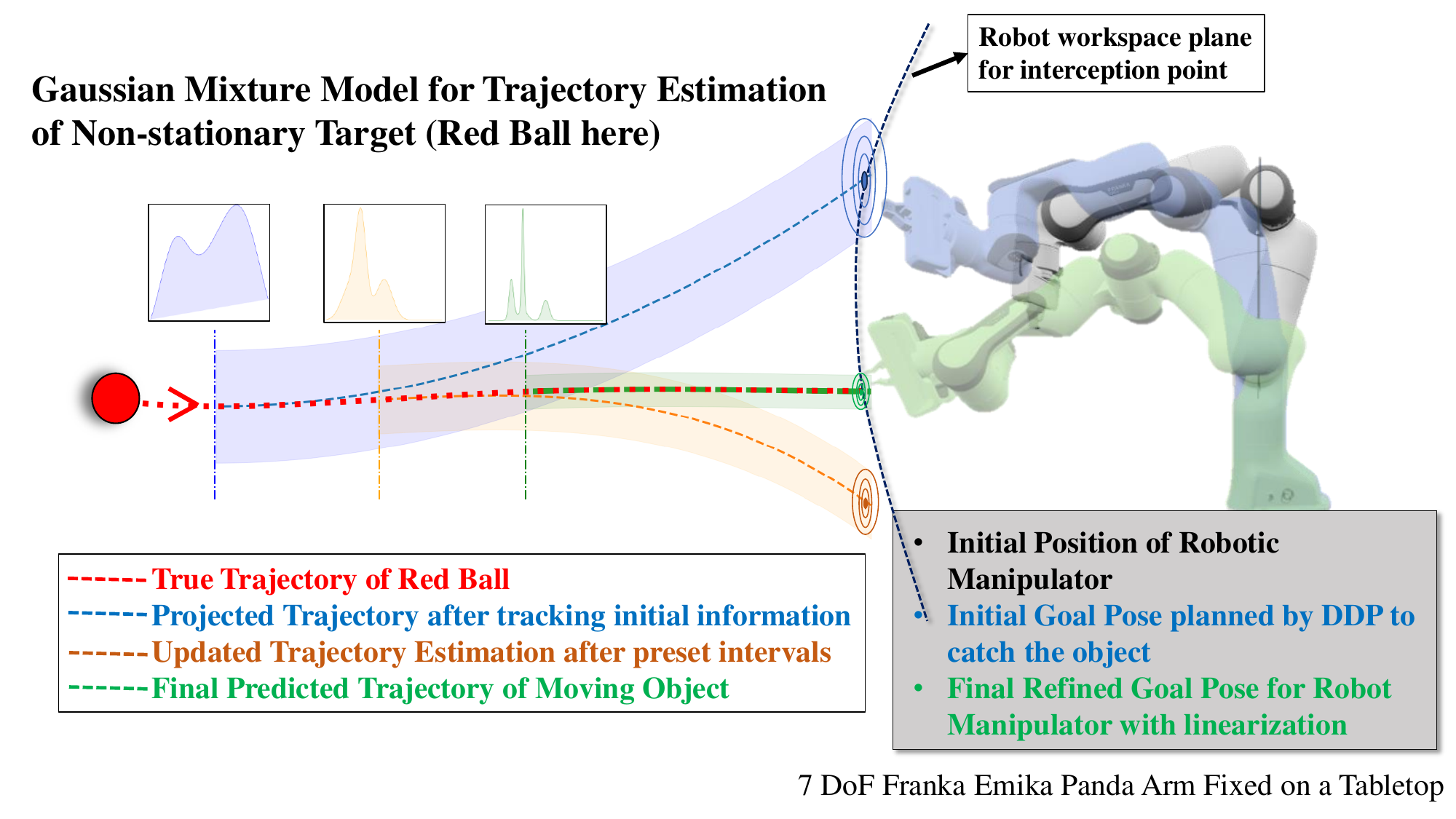}  
    \caption{
        Overall Experimental Platform for RETRO. A Robotic Research-grade 7 DoF Robotic Manipulator has been used and a ball is thrown towards the manipulator to intercept it. As more information for ball tracking is available, the GMM model becomes more accurate to predict the future trajectory. Variance gets smaller as more data is available for trajectory prediction. With updated trajectory information, the action sequence also gets refined through computation-efficient action adjustment method.   
        }%
    \label{fig:intro}                                   
\end{figure} 

\begin{figure}[htbp]                                       
    \centering                                             
    \includegraphics[width=\linewidth]{./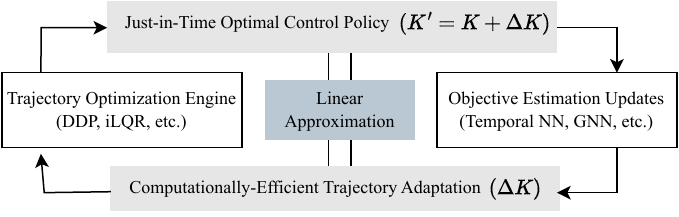}  
    \caption{
        Conceptual Diagram of RETRO methodology.
        }%
    \label{fig:method}                                   
\end{figure}  

\begin{figure*}[htbp]
    \includegraphics[width=1\textwidth]{./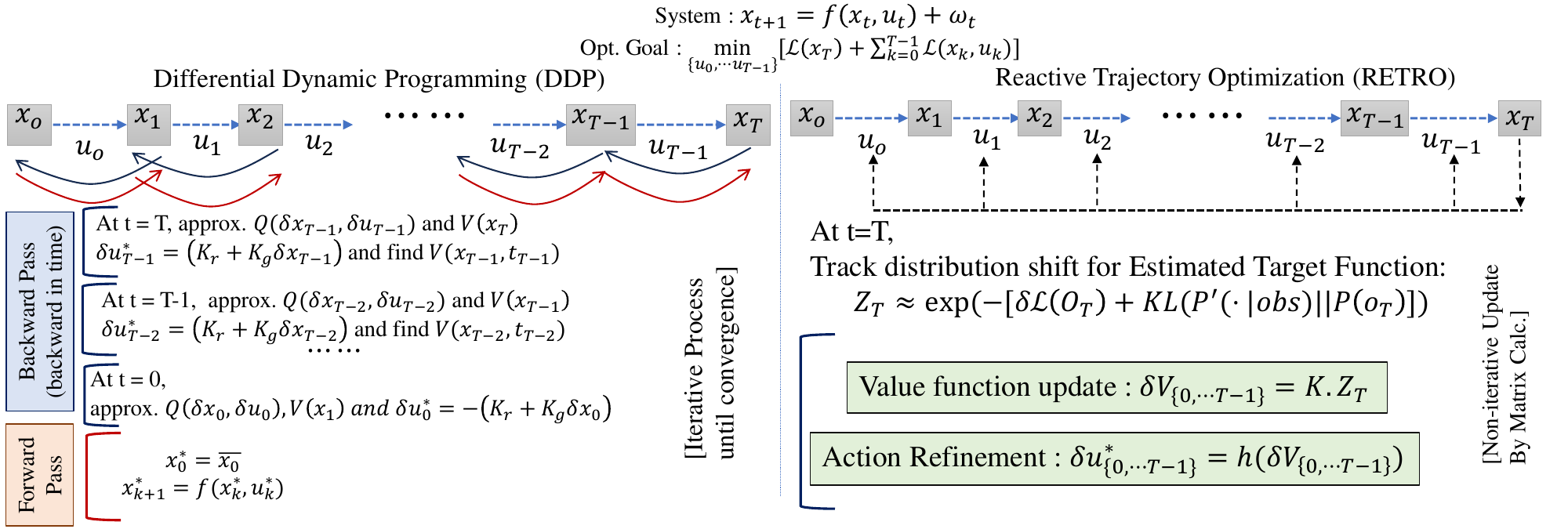}
    \caption{A Comparative Analysis Between DDP-based Action Refinement and RETRO suggested Optimal Action Sequence Planning. The left panel shows that a multiple-run DDP is required for a non-stationary target. In the right panel, the proposed RETRO upgrades the actions based on the initially planned trajectory and value function update calculated through linearized calculation.}
    \label{fig:comparison}
\end{figure*}

Practical applications of Reactive Trajectory Optimization (RETRO) are diverse and encompass a wide range of scenarios. For instance, consider a mobile robot tasked with reaching a goal pose that swiftly moves along an unpredictable yet discernible path. In another context, a navigating drone endeavors to safely explore a cluttered environment, equipped with sensors featuring severely limited detection ranges. Consequently, it can only compute its control policy incrementally. Taking a more imaginative perspective, envision a highly intelligent missile interceptor seeking to intercept a highly maneuverable hypersonic missile warhead.
In this research paper, our primary focus is on the development of an efficient real-time control algorithm. This algorithm empowers a 7-degree-of-freedom (7-DOF) Franka robotic arm to consistently and successfully catch a flying subject that approaches it. Our approach distinguishes itself from several analogous problem settings~\cite{x1, x3, rals, DVS, apan, objectsFlight, NEURIPS2022_4b70484e} by a crucial assumption: the trajectory of the flying subject, while predictable, can only be estimated incrementally and with precision challenges. This limitation arises from the early stages of robotic trajectory optimization, where the scarcity of sensory data renders the prediction of the flying subject's final impact point highly imprecise and inherently probabilistic in nature. Nonetheless, the trajectory optimization process must remain continuous throughout the entire operational epoch.

\section{Related Work}


In robotics, Differential Dynamic Programming (DDP) has been extensively studied 
to compute motion planning and control,
especially solving nonlinear and time-varying optimization problems in real-time applications. 
Specifically, multiple works~\cite{x1,x2,x3,x4} studied how to effectively apply DDP  
to control systems subject to nonlinear constraints, such as safety
contraints~\cite{x1} and feasibility constraints~\cite{x2,x3}, parametric uncertainty~\cite{RH_DDP}, 
as well as how to account for uncertainties and
stochasticity in the system dynamics~\cite{x4}.
More recently, multiple studies attempted to improve 
the efficiency and applicability of DDP-based control methods.
For example, study~\cite{x5} explored the integration of data-driven approaches and Gaussian
processes into DDP in order to enhance
the adaptability and generalization of DDP-based controllers.
Study~\cite{x6} presented a general parameterized version of differential dynamic
programming for solving problems with time-invariant parameters.
Study~\cite{ImDDP} ensured better numerical stability while handling dynamics and path constraints through primal-dual proximal Lagrangian Approach and advanced integrators.

\section{Optimal Control Problem in RETRO}
We consider the following general nonlinear discrete time control dynamical system
\begin{equation}
    \label{eq:system_dynamics}
    {x}_{t+1} = f({x}_t, {u}_t) +\omega_t,
\end{equation}
where ${x}_{t} \in \mathbb{R}^{n}$ and ${u}_{t} \in \mathbb{R}^{m}$ are the system state and the control input at time-step $t$, correspondingly. The nonlinear system transition function $f(\cdot,\cdot): \mathbb{R}^{n} \times \mathbb{R}^{m} \xrightarrow []{} \mathbb{R}^{n}$ is smooth and assumed to have second order derivatives. $\omega_t$ models independent and identically distributed Gaussian noise.  

Conventionally, the finite time optimal control problem giving~\eqref{eq:system_dynamics} is to find a control sequence ${U}^*=\{u^*_0,\cdots,u^*_T\}$ such that it minimizes the following cost function.
\begin{equation}
    \label{eq:loss}
J({u}_0,\cdots,{u}_T)=L({x}_T)+\sum_{t=0}^{T-1}L({x}_t, {u}_t),
\end{equation}
where $L({x}_t,{u}_t)$ and $L({x}_T)$ are the running cost and final cost, respectively. Given the principle of optimality \cite{principle_of_opt}, one can use a dynamic programming algorithm such as DDP to minimize Eq.~\eqref{eq:loss} by solving a single time-step control action backward in time~\cite{liao1991convergence},  
\begin{equation}
\label{eq: traditional value function}
\begin{split}
   V(x_t,t) = \underset{u_t}{min} \{\underbrace{{L}(x_t,u_t) + V(x_{t+1},t+1)}_{Q(x_t,u_t)}\}.
\end{split}
\end{equation} 
Here, we consider a quadratic Taylor expansion of a sufficiently small perturbation of $Q(x_t + \delta x_t,u_t + \delta u_t)$ around a nominal point $[x_t,u_t]$,
\begin{equation}
\label{Quadratic expansion of Q}
\begin{array}{lcl}
      Q(x_t + \delta x_t,u_t + \delta u_t) &\approx& Q(x_t,u_t) + (Q^{x}_{t})^{'}\delta x_t +(Q^{u}_{t})^{'}\delta u_t \\
      &+& \frac{1}{2} \begin{pmatrix}
         (\delta x_t)^{'}\\
         (\delta u_t)^{'}\\ 
     \end{pmatrix}
     \begin{pmatrix}
         Q^{xx}_{t} & Q^{xu}_{t} \\ 
         Q^{ux}_{t} & Q^{xx}_{t} 
     \end{pmatrix}
     \begin{pmatrix}
         \delta x_t\\
         \delta u_t
     \end{pmatrix}
\end{array}
\end{equation}
where the superscripts of $Q$ are denotes the derivatives that can be solved analytically~\cite{liao1991convergence}. Using the above approximation Eq.~\eqref{Quadratic expansion of Q}, one can minimize $\delta Q_t = Q(x_t + \delta x_t,u_t + \delta u_t) - Q(x_t,u_t)$ with respect to $\delta u_t$ and get the conventional DDP optimal control update backward in time,
\begin{equation}
\label{DDP optimal control law}
    \delta u^{\text{DDP}}_t = K_r + K_g\delta x_t,
\end{equation}
where the feedforward gain $K_r = -(Q^{uu}_{t})^{-1}Q^{u}_{t}$ and feedback gain $K_g = -(Q^{uu}_{t})^{-1}Q^{ux}_{t}$. Next, a line search-based forward process will be computed to update the nominal trajectory, 
\begin{equation}
    \begin{array}{lcl}
         \hat{x}_0&=& x_0,   \\
         \hat{u}_{t}&=& u_t + \epsilon K_r + K_g(\hat{x}_{t} - x_t),\\
         \hat{x}_{t+1}&=& f(\hat{x}_{t},\hat{u}_{t}),\\
    \end{array}
\end{equation}
where $0< \epsilon <1$ is a scalar. The above process repeats until the value function converges (see Fig.~\ref{fig:comparison} left panel).

On the contrary, optimal control handling in Reactive Trajectory Optimization (RETRO) is versatile and dynamic since the target state ${x}_{T}$ is non-stationary and probabilistic. Thus, the corresponding control sequence needs to be adjusted against the distribution shift according to the objective target state ${x}_{T}$. Here, we defined the time-varying objective goal location at each discrete time step $t$ as $o_t$, which can be modeled by the time-series forecasting techniques such as the Gaussian mixture model (GMM) \cite{GMM} or Bayesian time series analysis\cite{steel2010bayesian}. Therefore, the prior value function in Eq.~\eqref{eq: traditional value function} becomes the following equation in the objective belief space, $V(o_t,t) = \underset{u_t}{min}\{{L}(o_t,u_t) + V(o_{t+1},t+1)\}$. As new observations appear from tracking non-stationary target locations, the objective belief space at each time step evolves which in turn updates the prior belief about the objective trajectory. Thus, we introduce the following Kullback-Leibler(KL) divergence to quantify the distribution shift between posterior belief $P^\prime(o_t|obs)$ and prior belief $P(o_t)$,
\begin{equation}
    \label{eq: KL Diverngece for the first time definition}
    D_{KL}(P^\prime(o_t|obs)|| P(o_t) = E_{o_t\sim P^\prime(\cdot)}\text{log} \left [\frac{P^\prime(o_t|obs)}{P(o_t)} \right ].
\end{equation}
With~\eqref{eq: KL Diverngece for the first time definition}, we modify the prior value function by adding the KL divergence and reformulate the optimal control problem in RETRO,
\begin{equation}
\begin{split}
   \Tilde{V}(o_t,t) = \underset{u_t}{min} \{E_{o_t\sim P^\prime(\cdot)}[\Tilde{L}(o_t,u_t) + \Tilde{V}(o_{t+1},t+1)] + \\ D_{KL}(P^\prime(o_t|obs)|| P(o_t))\}.
\end{split}
\label{ideal+KL: optimality condition}
\end{equation}  

\begin{figure}[htbp]
\centering
    \includegraphics[width=1\linewidth]{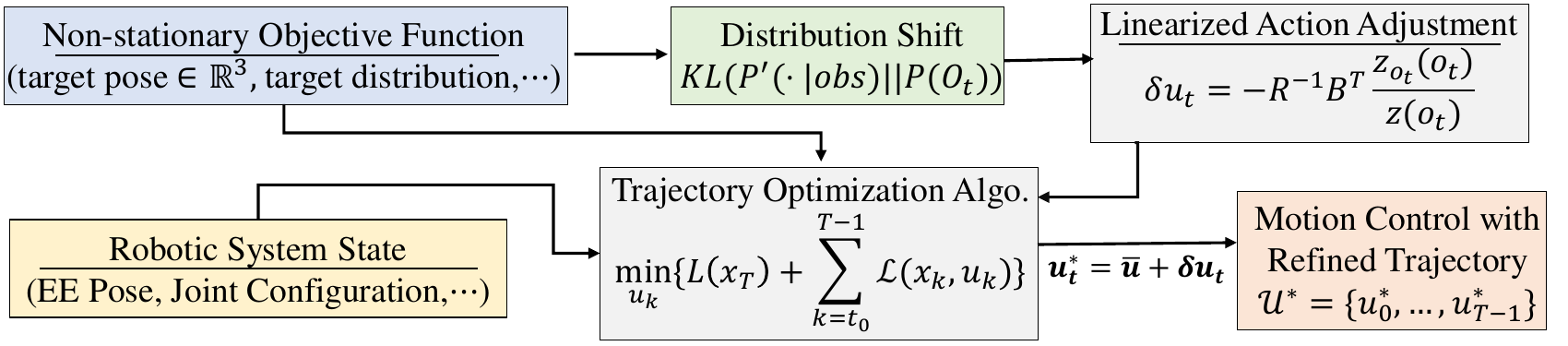}
    \caption{Overall Algorithmic Blocks of RETRO}
    \label{fig:algo}
\end{figure}
\color{black}
\section{Methodology of Reactive Trajectory Optimization (RETRO)}
Traditional DDP algorithms effectively solve the optimal control problem under the scenario where the tracking target is stationary and deterministic. However, when the evolution of the target trajectory is non-stationary and uncertain, it becomes challenging to run the DDP algorithm to achieve optimal control sequence as well as realize expected tracking performance. One naive solution to mitigate uncertainties is to repeat performing the DDP algorithm at certain time intervals as we have more information about future trajectories. However, running the DDP algorithm multiple times is very computation-heavy and there remains the risk of solving a online dynamic robotic task because of multiple forward-backward processes in control calculation\cite{Plancher18MastersThesis}. 

To address the above challenges, we develop an efficient control adjustment algorithm while maintaining the utilization of the DDP solutions. Specifically, when we receive a belief-space trajectory $\{o_0,\cdots, o_T\}$ for varying target objectives, the KL divergence enters into the value function Eq.~\eqref{ideal+KL: optimality condition} resulting in the prior DDP optimal solution becoming sub-optimal. Therefore, we perform control adjustments by exploiting linearly solvable optimal control formulation to fine-tune previously computed sub-optimal control sequences $\{\Bar{u}_0,\cdots,\Bar{u}_T\}$. Intuitively, our method utilizes the introduced KL divergence term to guide the refinement of control actions. Fig. \ref{fig:algo} depicts the overall methodology. We provide the detailed derivations of our algorithm in the following subsection. Although the derivations slightly follow the KL control theory \cite{Todorov2009, pmlr-v65-neu17a}, the control adjustment scenario is innovative in our work.

\subsection{Linearly Solvable Value Function Update}
We start by calculating the difference of the minimized value functions,
\begin{equation}
\begin{aligned}
     &\delta V(o_t,t) = \Tilde{V}(o_t,t) - V(o_t,t)  \\
     &= \delta L(t) + \delta V(o_{t+1},t+1) + D_{KL}(P^\prime(o_t|obs)|| P(o_t)]\\
     &= \delta L(t) + E_{o_t\sim P^\prime(\cdot)}\text{log} \left [\frac{P^\prime(o_t|obs)}{P(o_t)e^{-\delta V(o_{t+1},t+1)}} \right ],
\end{aligned}
\label{KL introduced: delta V}
\end{equation}
where $\delta V(o_{t+1},t+1)=E_{o_t\sim P^\prime(\cdot)}[\Tilde{V}(o_{t+1},t+1)]-V(o_{t+1},t+1)$ and $\delta L(t)=E_{o_t\sim P^\prime(\cdot)}[\Tilde L(o_t,u_t)]-L(o_t,u_t)$. Next, we define the desirability function, 
\begin{equation}
\label{Desirability function: z}
   z(o_{t+1}) = e^{-\delta V(o_{t+1},t+1)}.
\end{equation}
Intuitively, $z(o_t)$ contains information about how much control adjustment is required at time-step $t$. We next define a normalization term for the KL divergence as,
\begin{equation}
\label{Renormalization term: g[z](o_t)}
   g[z](o_t) = \sum_{t}P(o_t)z(o_{t+1}).
\end{equation}
As a result, the Eq.\eqref{KL introduced: delta V} can be simplified as,
\begin{equation}
\begin{aligned}
     &\delta V(o_t,t) = \delta L(t) + E_{o_t\sim P^\prime(\cdot)}\text{log} \left [\frac{P^\prime(o_t|obs)}{P(o_t)z(o_{t+1})\frac{g[z](o_t)}{g[z](o_t)}} \right] \\
     &= \delta L(t) + E_{o_t\sim P^\prime(\cdot)}\text{log} \left [\frac{1}{g[z](o_t)} \right] + D_{KL}(P^\prime(o_t|obs)|| \frac{P(o_t)z(o_{t+1})}{g[z](o_t)}).\\
\end{aligned}
\label{KL introduced simplified: delta V}
\end{equation}

\begin{figure}[htbp]
\centering
    \includegraphics[width=1\linewidth]{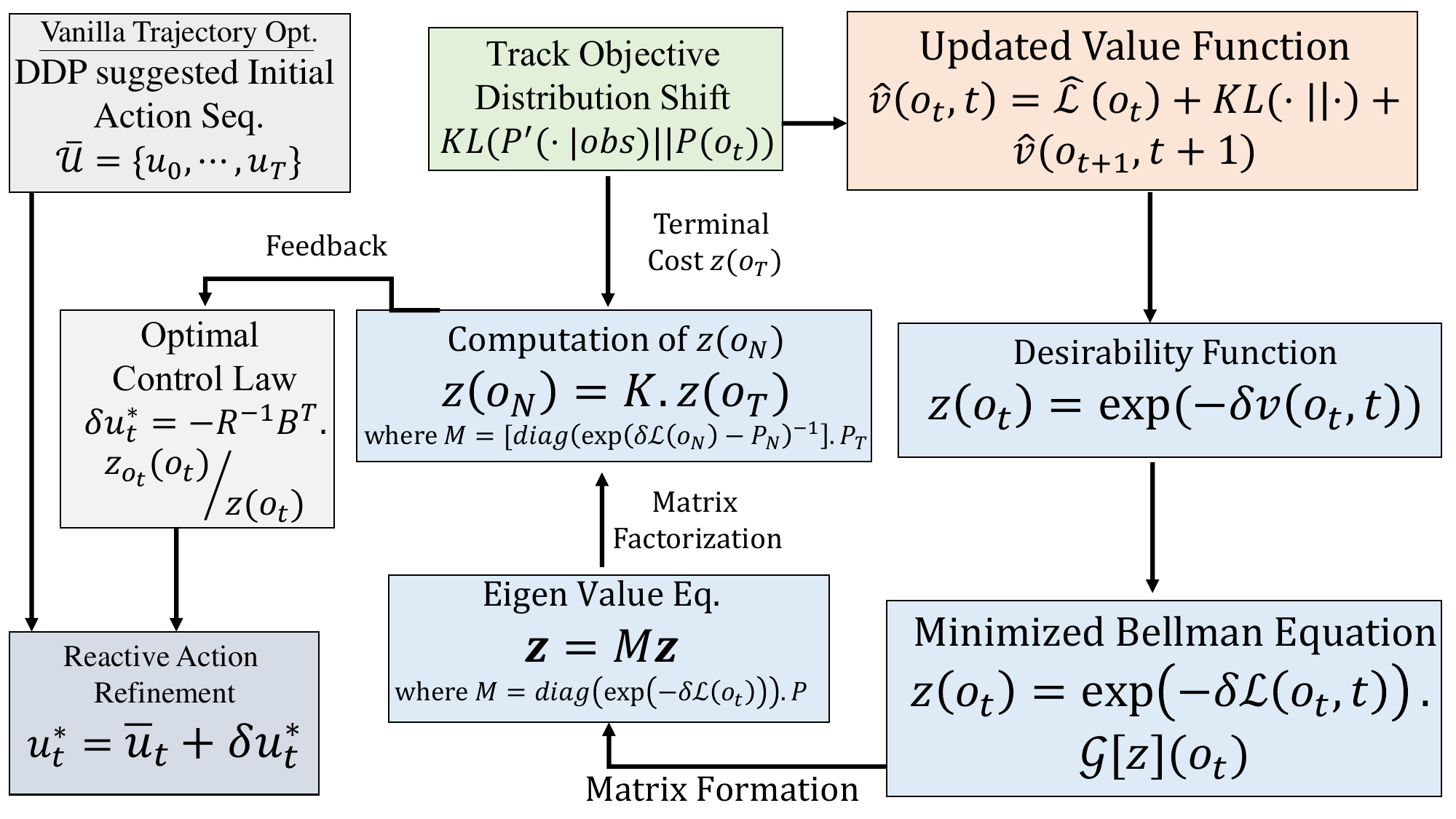}
    \caption{RETRO diagram for control adjustment}
    \label{fig:main}
\end{figure}

To minimize the additional KL divergence introduced term $D_{KL}(P^\prime(o_t|obs)|| \frac{P(o_t)z(o_{t+1})}{g[z](o_t)})$ in~\eqref{KL introduced simplified: delta V}, one can simplify and restrict the following termination conditions,
\begin{equation}
\label{Termination condition: z(t+1)}
z(o_{t+1}) = \frac{P^\prime(o_t|obs)}{P(o_t)}g[z](o_t),
\end{equation}
which results in the vanishing of KL divergence term since~\eqref{Termination condition: z(t+1)} ensures two identical distributions. Thus, we obtain the minimized value function update $\delta V(o_t,t)$ formulation as $\delta V(o_t,t) = \delta L(t) + \text{log}[\frac{1}{g[z](o_t)}]$. Using the definition of the desirability function~\eqref{Desirability function: z}, the minimized equation~\eqref{KL introduced simplified: delta V} can further be simplified under some algebraic operations (see Appendix for details),
\begin{equation}
\label{Linear solvable in z}
z(o_{t}) = e^{-\delta L(t)}g[z](o_t).
\end{equation}
The main advantage of the above relationship~\eqref{Linear solvable in z} is that the exponentiated minimized value function $z(o_{t})$ becomes linear in $z$ as the normalization term $g[z](o_t)$ can be treated as a linear operator. Essentially, it can be written in a more compact matrix form (see Appendix for details).
\begin{equation}
\label{Linear solvable in z: simplified}
z = Mz,
\end{equation}
where $M=diag(e^{-\delta L(1:T)}) P(O_{1:T})\in \mathbb{R}^{(T\times T)}$. Therefore, Eq.~\eqref{Linear solvable in z: simplified} is greatly simplified compared to~\eqref{Linear solvable in z} as in the vector form. To solve Eq.~\eqref{Linear solvable in z: simplified} efficiently, we define index sets $N$ and $T$ as a non-terminal trajectory index and terminal trajectory index, respectively. Thus, Eq.~\eqref{Linear solvable in z: simplified} can be partitioned as $\text{diag}(e^{\delta L(N)} - P(O_{N}))z(O_{N}) = P(O_{T})z(O_{T})$ via matrix partitioning , which gives the solution as 
\begin{equation}
\label{Linear solvable in z: simplified partitioned solution}
z(O_{N}) = (\text{diag}(e^{\delta L(N)}) - P(O_{N}))^{-1}P(O_{T})z(O_{T}).
\end{equation}
Once we calculate all $z$ values, i.e., $z(O_{1:T})$ for the pre-computed trajectory from DDP, we can associate the change of minimized value function $\delta V(o_t,t)$ with the desirability function $z(o_t,t)$ using the chain rule,
\begin{equation}
\label{delta V to delta z}
    \frac{\partial \delta V(o_t,t)}{\partial o_t} = \frac{\partial z(o_t,t)}{\partial o_t} \frac{1}{z(o_t)}.
\end{equation}
In the next section, We provide an example of fine-tuning the control sequence according to the adjustment of the value function.

\subsection{Adjustment of sub-optimal control sequence}
Considering the following control dynamics of our robotic arms with quadratic cost $L(x_t,u_t,t)= \frac{1}{2}u_{t}^{\top}R_t u_t + x_t^{\top}Q_t x_t $, the closed-form optimal control law solved from the Hamilton-Jacobi-Bellman(HJB) equation is given by,
\begin{equation}
\label{delta V to delta u}
    \begin{array}{lcl}
         dx &=& Axdt + Budt + \sigma d\omega,   \\

         u_{t}^{\text{optimal}} &=& -R_t^{-1}B^{\top}\frac{\partial V(x,t)}{\partial x}.
    \end{array}
\end{equation}
Together combining~\eqref{delta V to delta z} and~\eqref{delta V to delta u}  we calculate each step control adjustment as 
\begin{equation}  
\label{delta u}
    \delta u^{*}_{t} = -R_t^{-1}B^{\top}\frac{\partial \delta V(o_t,t)}{\partial o_t} = -R_t^{-1}B^{\top}\frac{\partial z(o_t,t)}{\partial o_t} \frac{1}{z(o_t)}.
\end{equation}
As a result, we can finetune the previous sub-optimal control law as  $u_{t}^{*} = \Bar{u}_{t}+\delta u_{t}^{*}$. To this end, we complete our RETRO algorithm design. A summary of the proposed algorithm can be found in the algorithm~\ref{Algorithm 1} and figure~\ref{fig:main}.
\begin{algorithm}
	\caption{RETRO Algorithm}
	\label{Algorithm 1}
		\textbf{Input:} 
  \\Initial Control Seq. $\Bar{U}=\{\Bar{u}_0,\cdots,\Bar{u}_T\}_{\text{DDP}}$  \\ Prior Belief Traj. $B=\{P(o_0),\cdots,P(o_T)\}$\\
  Shift Threshold, $\lambda_{thres}$\\
  \textbf{Output:}
  \\Optimal Control Sequence, $U^*$\\
  \SetKwProg{Fn}{Function}{}{}
\Fn {\text{Online trajectory optimization($\Bar{U}, B, \lambda_{thres}$):}}{
		\For{$t \gets 1$ to $T$} {
        \texttt{GET} : $\{obs\}$ of target function \\
        \texttt{UPDATE} : Calc. Posterior $P^\prime(o_t|obs)$ \\
        \texttt{CALCULATE} : $D_{KL}(P^\prime(o_t|obs)||P(o_t))$
        \\\uIf{$D_{KL}(\cdot||\cdot)>\lambda_{thres}$}{
            \texttt{CALCULATE} : $z(o_t)$, $\delta u_t^*$ \\
            \texttt{FINETUNE} : $u_{t}^{*}\gets\Bar{u}_{t}+\delta u_{t}^{*}$
        
        }
        \texttt{EXECUTE} : $u_{t}^{*}$
    }
} 
\end{algorithm}

\subsection{Regret Bound Analysis}
In this section, we conduct theoretical analyses to investigate the error performance of the proposed RETRO algorithm. We provide a regret-bound analysis compared with an Oracle value function solved by an Oracle trajectory optimization algorithm that completely knows the future evolution of the target function. First, we provide an intuition and observation. From Eq.~\eqref{KL introduced simplified: delta V}, we aim to cancel the additional KL divergence cost but end with another normalization term ${g[z](o_t)}$. It should be understood that this term will accumulate leading to $V > V^{*}$, where $V^{*}$ denotes the optimal value function achieved by the Oracle trajectory optimization algorithm. On the other hand, ${g[z](x_t)}$ should be bounded because of the following assumption and observation.
\begin{assumption}
\label{Assumption: Bounded KL Divergence}
Let $P(o_t)$ be the prior distribution of the object and $P^\prime(o_t|obs)$ be the updated posterior distribution based on new observations. We assume the distribution shift measured by $D_{KL}(P^\prime(o_t|obs)|| P(o_t))\ $ is smooth and the upper bound can be estimated by $\alpha(\frac{1}{T})$, 
\begin{equation}
    0\leq D_{KL}(P^\prime(o_t|obs)|| P(o_t))\ \leq \alpha(\frac{1}{T}).
\end{equation}
where $\alpha(.)$ is an increasing function. Essentially, the above assumption excludes the rapid change in the objective trajectory $o_t$ (see Appendix for details).
\end{assumption}
\begin{lemma}
\label{Lemma: Bounded Normalization Term}
The normalization term $g[z]o(t)$ defined in Eq.\eqref{Renormalization term: g[z](o_t)} by is bounded, i.e.,
\begin{equation}
    g[z](o_t) \leq Te^{-\delta V_m},
\end{equation}
where $T$ is the total time-step in the horizon of the control action sequence and $\delta V_m$ is the minimum value function deviated from the optimal value function $V^{*}$ across the total time-step.
\end{lemma}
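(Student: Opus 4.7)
\emph{Proof plan.} The statement is a direct upper-bound estimate on the operator $g[z](o_t)=\sum_{\tau}P(o_{\tau})z(o_{\tau+1})$ defined in Eq.~\eqref{Renormalization term: g[z](o_t)}, so my plan is to peel off the two ingredients (the desirability factor $z$ and the prior weights $P(o_{\tau})$) one at a time, using the definition of $z$ in Eq.~\eqref{Desirability function: z} and the minimality defining $\delta V_m$.

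First I would substitute $z(o_{\tau+1})=e^{-\delta V(o_{\tau+1},\tau+1)}$ into $g[z](o_t)$, obtaining a finite sum of positive exponentials weighted by prior probabilities. Since $\delta V_m$ is declared to be the \emph{minimum} of $\delta V(o_{\tau},\tau)$ over the horizon, we have $\delta V(o_{\tau+1},\tau+1)\ge \delta V_m$ for every $\tau$ in the summation range, so each exponential factor satisfies $e^{-\delta V(o_{\tau+1},\tau+1)}\le e^{-\delta V_m}$. Pulling this uniform bound out of the sum leaves the scalar $e^{-\delta V_m}$ multiplied by $\sum_{\tau}P(o_{\tau})$.

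To close the argument I would bound $\sum_{\tau}P(o_{\tau})$ by $T$. Here the sum runs over the $T$ time indices along the trajectory (not over the state values of a single distribution), so each summand $P(o_{\tau})$ is a scalar probability value bounded above by $1$; adding $T$ such terms gives the trivial bound $T$. Combining these two steps yields $g[z](o_t)\le T\,e^{-\delta V_m}$ as claimed, and the conclusion is consistent with the observation in the surrounding text that $g[z](\cdot)$ accumulates across the horizon.

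The main obstacle, in my view, is not any inequality but rather a clean statement of what $\delta V_m$ and the summation range actually are: once one fixes ``$\delta V_m=\min_{\tau}\delta V(o_{\tau},\tau)$'' and interprets the sum in Eq.~\eqref{Renormalization term: g[z](o_t)} as being over the time indices of the pre-computed DDP trajectory, the lemma reduces to the two-line chain of inequalities above. I would also note that Assumption~\ref{Assumption: Bounded KL Divergence} is not needed for this bound per se, but it guarantees that $\delta V_m$ is itself finite, which is what makes the bound $Te^{-\delta V_m}$ meaningful when used later in the regret analysis.
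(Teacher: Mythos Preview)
Your proposal is correct and is essentially the paper's own argument: both proofs use exactly the two ingredients $P(o_\tau)\le 1$ and $e^{-\delta V(o_{\tau+1},\tau+1)}\le e^{-\delta V_m}$, with the only cosmetic difference that the paper first drops the probability weights and then bounds the exponentials, whereas you do these in the opposite order. Your remarks on the definitional role of $\delta V_m$ and the non-use of Assumption~\ref{Assumption: Bounded KL Divergence} here are accurate and go slightly beyond what the paper spells out.
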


\begin{proof}
From the definition of 
\begin{equation}
\begin{array}{lcl}
     g[z](o_t) &=& \sum_{t} P(o_t) z(o_{t+1}) \\
     &\leq& \sum_{t} z(o_{t+1})\; [ \text{since} \ (p(o_t))\leq 1] \\
     & = & e^{-\delta V_1} + e^{-\delta V_2} +  \cdots + e^{-\delta V_{T}}
     \;[\text{From def. of z}]\\
     &\leq & Te^{-\delta V_m} \;[\text{From def. of }\delta V_m]
\end{array}    
\end{equation}
where $m \in [1,T]$ such that $e^{-\delta V_m}$ achieves the maximum.
\end{proof}
Next, we describe the most essential theorem in stating the upper error bound. The proof of this theorem will use the above assumption~\ref{Assumption: Bounded KL Divergence} and lemma~\ref{Lemma: Bounded Normalization Term}.
\begin{theorem}
\label{Theorem: Regret is bounded}
The regret function $R_{t}$ defined as the expected deviation of the current value function $\Tilde{V}_t$ from the optimal value function $V^{*}_t$ is upper bounded, 
\begin{equation}
    \forall t\in [1,T] \quad R_{t} \leq \alpha(\frac{1}{T}) + \text{log}(T)
\end{equation}
\end{theorem}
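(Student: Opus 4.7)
The plan is to identify the regret $R_t$ with the value-function deviation $\delta V(o_t,t)=\tilde V_t-V^*_t$, and then bound term-by-term the expression for $\delta V$ that is already derived in Eq.~\eqref{KL introduced simplified: delta V}. The two ingredients available for this are Assumption~\ref{Assumption: Bounded KL Divergence}, which caps the per-step distribution shift at $\alpha(\tfrac{1}{T})$, and Lemma~\ref{Lemma: Bounded Normalization Term}, which caps the normalisation operator $g[z]$ by $T e^{-\delta V_m}$.

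First I would rewrite Eq.~\eqref{KL introduced simplified: delta V} in the regret form
\begin{equation*}
R_t = \delta L(t) + E_{o_t\sim P^\prime(\cdot)}\log\frac{1}{g[z](o_t)} + D_{KL}\!\left(P^\prime(o_t|obs)\,\|\,\frac{P(o_t)z(o_{t+1})}{g[z](o_t)}\right).
\end{equation*}
RETRO enforces the termination condition~\eqref{Termination condition: z(t+1)}, which is precisely the choice that makes the right-most KL on this line vanish. What survives is $R_t = \delta L(t) + E_{o_t\sim P^\prime(\cdot)}\log\frac{1}{g[z](o_t)}$, and the proof then reduces to bounding the two surviving summands.

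For the cost-mismatch term $\delta L(t) = E_{P'}[\tilde L]-L$, I would invoke Assumption~\ref{Assumption: Bounded KL Divergence}: because the per-step KL between prior and posterior is at most $\alpha(\tfrac{1}{T})$, a Donsker--Varadhan / change-of-measure inequality combined with the smoothness of $L$ yields $\delta L(t)\le\alpha(\tfrac{1}{T})$. For the log-normalisation term, I would apply Lemma~\ref{Lemma: Bounded Normalization Term}: since $g[z](o_t)\le T e^{-\delta V_m}$ and $\delta V_m\ge 0$ (because the KL-regularised problem is strictly more constrained than the oracle, so $\tilde V\ge V^*$ and its deviations are non-negative), we get $g[z](o_t)\le T$, hence $\log\frac{1}{g[z](o_t)}\le\log T$. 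Adding the two bounds yields $R_t\le\alpha(\tfrac{1}{T})+\log T$ for every $t\in[1,T]$, as claimed.

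The main obstacle I anticipate is making the cost-gap step $\delta L(t)\le\alpha(\tfrac{1}{T})$ genuinely rigorous: the termination condition and Lemma~\ref{Lemma: Bounded Normalization Term} are ready to hand, but the paper does not explicitly state a boundedness hypothesis on $L$, and converting a KL bound on the measure change into a cost-value gap either requires an added boundedness assumption on the running cost or a careful Donsker--Varadhan duality argument. A smaller but non-trivial subtlety is that Lemma~\ref{Lemma: Bounded Normalization Term} only delivers an \emph{upper} bound on $g[z]$, whereas bounding $-\log g[z]$ from above really needs a \emph{lower} bound on $g[z]$; the cleanest resolution is to argue from the non-negativity of $\delta V_m$ noted above, which should be spelled out explicitly so as not to introduce circularity with the statement of the theorem.
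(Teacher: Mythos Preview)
Your proposal has a genuine gap precisely at the point you flag as a ``smaller subtlety'': you need to bound $\log\tfrac{1}{g[z](o_t)}$ from above, but Lemma~\ref{Lemma: Bounded Normalization Term} gives only $g[z](o_t)\le Te^{-\delta V_m}$, which after inversion yields $\log\tfrac{1}{g[z](o_t)}\ge -\log T+\delta V_m$, a \emph{lower} bound. Your suggested fix (non-negativity of $\delta V_m$) only sharpens the upper bound on $g[z]$ to $g[z]\le T$, which again inverts to a lower bound on $\log\tfrac{1}{g[z]}$; it never produces the inequality in the direction you need. Separately, the step $\delta L(t)\le\alpha(\tfrac{1}{T})$ is not supplied by the paper and, as you note, would require an extra hypothesis; the paper's argument avoids $\delta L$ entirely.

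The paper takes a different route that sidesteps both issues. Instead of working from Eq.~\eqref{KL introduced simplified: delta V}, it applies the termination condition~\eqref{Termination condition: z(t+1)} directly to write $z(o_m)=\tfrac{P'(o_m|obs)}{P(o_m)}\,g[z](o_m)$, takes logs and expectations, and then uses Lemma~\ref{Lemma: Bounded Normalization Term} in the \emph{correct} direction: $\log g[z](o_m)\le \log T-\delta V_m$. The crucial point is that the factor $e^{-\delta V_m}$ in the lemma reintroduces $R_m$ on the right-hand side, giving a self-referential inequality $R_m\le \alpha(\tfrac{1}{T})+\log T - R_m$, which solves to $R_m\le\tfrac{1}{2}\bigl(\alpha(\tfrac{1}{T})+\log T\bigr)$. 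Working first at the extremal index $m$ (where $\delta V_m$ is minimal) is what makes this bootstrap close; the general-$t$ bound then follows from $R_t\le\alpha(\tfrac{1}{T})+\log T-R_m$ together with $R_m\ge 0$. This bootstrap structure is the missing idea in your plan.
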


\begin{proof}
We consider the proof by starting at time-step $m$, suggested by the previous lemma~\ref{Lemma: Bounded Normalization Term}. From the definition,
\begin{equation}
\begin{array}{lcl}
     R_{m} &=&  ||E_{o_m\sim P^\prime(\cdot)}[\Tilde{V}_m - V^{*}_m]||  = E_{o_m\sim P^\prime(\cdot)}||\delta V_m|| \\
     &=& E_{o_m\sim P^\prime(\cdot)}\text{log}(z(o_m)) \\
     &=& E_{o_m\sim P^\prime(\cdot)}\left[\text{log} \left [\frac{P^\prime(o_m|obs)}{P(o_m)}g[z](o_m) \right ]\right]  \\
     &\leq& D_{KL}(P^\prime(o_m|obs)|| P(o_m)) + E_{o_m\sim P^\prime(\cdot)}\text{log}(Te^{-\delta V_m})\\
     &\leq& \alpha(\frac{1}{T}) + \text{log}(T) - R_{m}\\
     &\leq& \frac{\alpha(\frac{1}{T})) + \text{log}(T)}{2}.
\end{array}    
\end{equation}
On the other hand, the lower bound of $R_{m}$ is 0 if there is no distribution shift, i.e., $D_{KL} = 0$. Together, we have the following
\begin{equation}
    0 \leq R_m  \leq \frac{\alpha(\frac{1}{T}) + \text{log}(T)}{2}.
\end{equation}
Thus, for a general time-step $t \in [1,T]$, $R_t \leq \alpha(\frac{1}{T}) + \text{log}(T) - R_m$, which gives us the final regret upper bound 
\begin{equation}
 \forall t\in [1,T] \quad R_t \leq \alpha(\frac{1}{T}) + \text{log}(T).
\end{equation}
\end{proof}
The above analysis provides an essential trade-off of our proposed algorithm. The regret error bound consists of two counterparts. The first term measures how large the distribution shift computed from the KL divergence. The larger $T$ we have, the smaller $\alpha(\frac{1}{T})$ is. On the contrary, the second part is related to the error bound across all time step $T$. The larger $T$ we have, the larger $\text{log}(T)$ has. Our empirical simulation evidence also indicates the above theoretical trade-off evidence (see Fig.~\ref{fig:Time}). In practice, researchers can decide the horizon $T$ depending on the problem.    

\subsection{Complexity Analysis}
We next analyze the computational complexity. The key ingredient of the RETRO algorithm is solving a linear equation~\eqref{Linear solvable in z: simplified} only once to get the fine-tuned control adjustment of all time steps, which clearly distinguishes the multiple-run DDP algorithm. We provide the following comparison between the multiple-run DDP and the RETRO algorithm.

The operation complexity of a single backward-forward DDP algorithm is given by $T(2n^{3} + \frac{7}{2}n^{3}m + 2nm^{3} + \frac{1}{3}m^{3})$~\cite{liao1991convergence}. In most cases, the dimension of the state variable $n$ is much higher than the dimension of the control variable $m$, i.e., $n\gg m$. Therefore, we can simplify the single backward-forward operation complexity as $O(Tn^3)$. Considering the quadratic convergence rate and termination bounds $O(\frac {1}{n^{2}T^{2}})$, we require at least $O(Tn)$ iterations before satisfying termination bounds (see Appendix for details). Together, the total iterations of the DDP algorithm can be estimated as $O(Tn\times Tn^{3}) = O(T^{2}n^{4})$.

On the other hand, our proposed method directly computes adjustment without any iterations. We solve Eq.~\eqref{Linear solvable in z: simplified} by inversing $T \times T$ matrix, which can be estimated as $O(T^{2.4})$. After that, the fine-tuned control adjustment is computed by Eq.~\eqref{Linear solvable in z: simplified partitioned solution} and Eq.~\eqref{delta u}, where the total operation complexity is by $O(Tn^{2})$. Together, the total running complexity is $O(T^{2.4}) + O(Tn^{2})$. Compared with the DDP algorithm $O(T^{2}n^{4})$, our methods slightly sacrifice $O(T^{0.4})$ for the first term because of the ambitious goal of finding fine-tuning control sequence without any iterations. However, our proposed method does not require computations $O(n^{4})$ on state dimensions for the first term and improves the second term by $O(Tn^{2})$ which together hugely improves the running time. The above comparison illustrates the superior complexity performance of our RETRO algorithm.

\section{EXPERIMENTAL PLATFORM}
We utilized the 7-DoFs Franka Emika Panda robotic manipulator as a simulation testbed to validate our algorithm. Each state sample ${x}_t\in\mathbb{R}^{13}$ consists of 13 numerical floating values, including $7$ joint angles, end-effector tip position coordinates in $\mathbb{R}^3$ and the most likely target coordinates in $\mathbb{R}^3$ estimated from GMM model~\cite{GMM}. We initialized the manipulator from a default \textit{home} position for all experiments, but the dynamic target trajectory was initiated from different locations with varying velocities for validating the generalizability of the proposed method. To meet the requirements of derivative computations by using an automatic differentiation tool and smooth calculation of system dynamics, we have modeled the entire simulation platform in the open-source robot simulator software Drake\cite{drake}. 

For real-hardware demonstrations, we used the same 7 DoFs Franka Emika robot manipulator fixed on a table-top as shown in Fig. \ref{fig:robot}. 
Our algorithm for object tracking and pose estimation, as well
as our adaptive trajectory optimization pipeline, ran on the QUAD GPU server equipped with an Intel Core-i9-9820X processor. We incorporated the Intel RealSense Depth Camera D435i for obstacle detection and localizing dynamic targets in $\mathbb{R}^3$. We tracked the target trajectory information through depth sensing in real-hardware experimentation and transformed the most probable intercepting point into the robot's coordinate system to catch the dynamic object smoothly in real time.

\section{RESULTS and DISCUSSIONS}
Through our experiments and simulation results, we wanted to investigate the following three questions, 
\begin{itemize}
    \item Can RETRO produce an optimized control sequence in real-time for intercepting the dynamic object?
    \item Can RETRO perform computation-efficient control adjustment in dynamic environments compared to the multiple-run iterative trajectory optimization process?
    \item Can RETRO exhibit trade-off performance as analysed by our theoretical proof and how does the control cost performance change related to time-horizon $T?$    
\end{itemize}
\begin{figure}[htbp]
\centering
    \includegraphics[width=1\linewidth]{./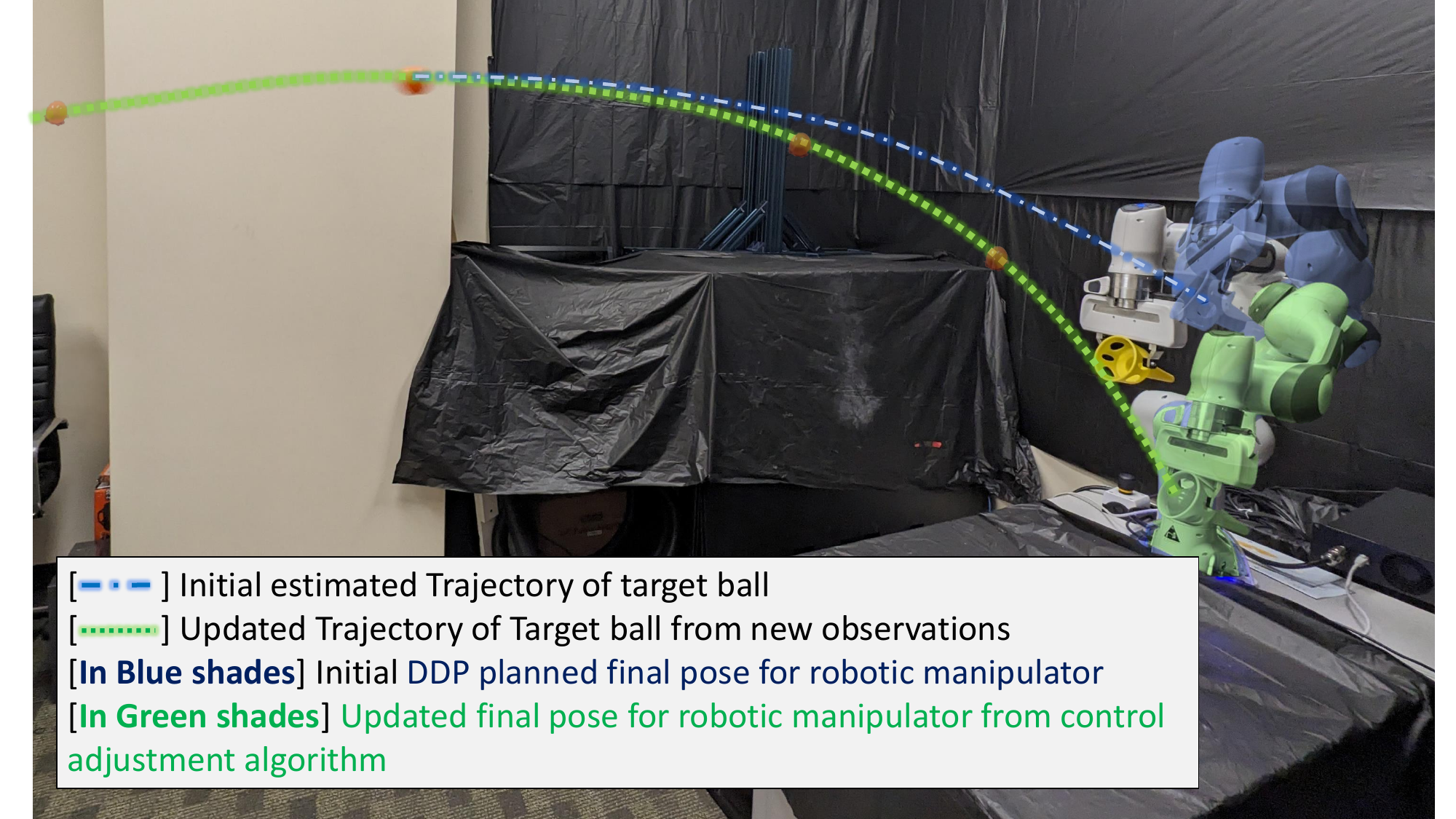}
    \caption{Real Hardware demonstration for robotic Manipulator for target catching task.}
    \label{fig:robot}
\end{figure}

\subsection{Successful Task Completion}
Our study aimed at leveraging the computation-efficient control adjustment technique for intercepting a flying target thrown toward the robotic manipulator. The object is thrown with different but bounded velocities from different initial locations. 
As discussed earlier, the object trajectory is tracked through a Realsense Depth sensor and the tracked trajectory points are given as time series inputs to our GMM model. The GMM model produces the probabilistic estimation for the most likely intercepting point. In Fig. \ref{fig:robot}, we have shown how in a real-hardware setting, the robotic manipulator followed the control sequence provided by RETRO. In the blue-shaded robot figure, we illustrate the sub-optimal final robot pose which is suggested by initial DDP-optimized control inputs. Since the probabilistic estimation gets updated with new target locations, the obsolete control inputs need to be upgraded. The in-queue control sequence gets refined in the direction given by the feedback accruing from KL divergence cost at the terminal state. In green shades, we show how the optimal final pose looks for the robotic manipulator and the robotic manipulator intercepts the target object(a low-weight ball here) with smooth trajectory execution.

\subsection{Efficient Control Computation}
\begin{figure}[htbp]
\centering
    \includegraphics[width=1\linewidth]{./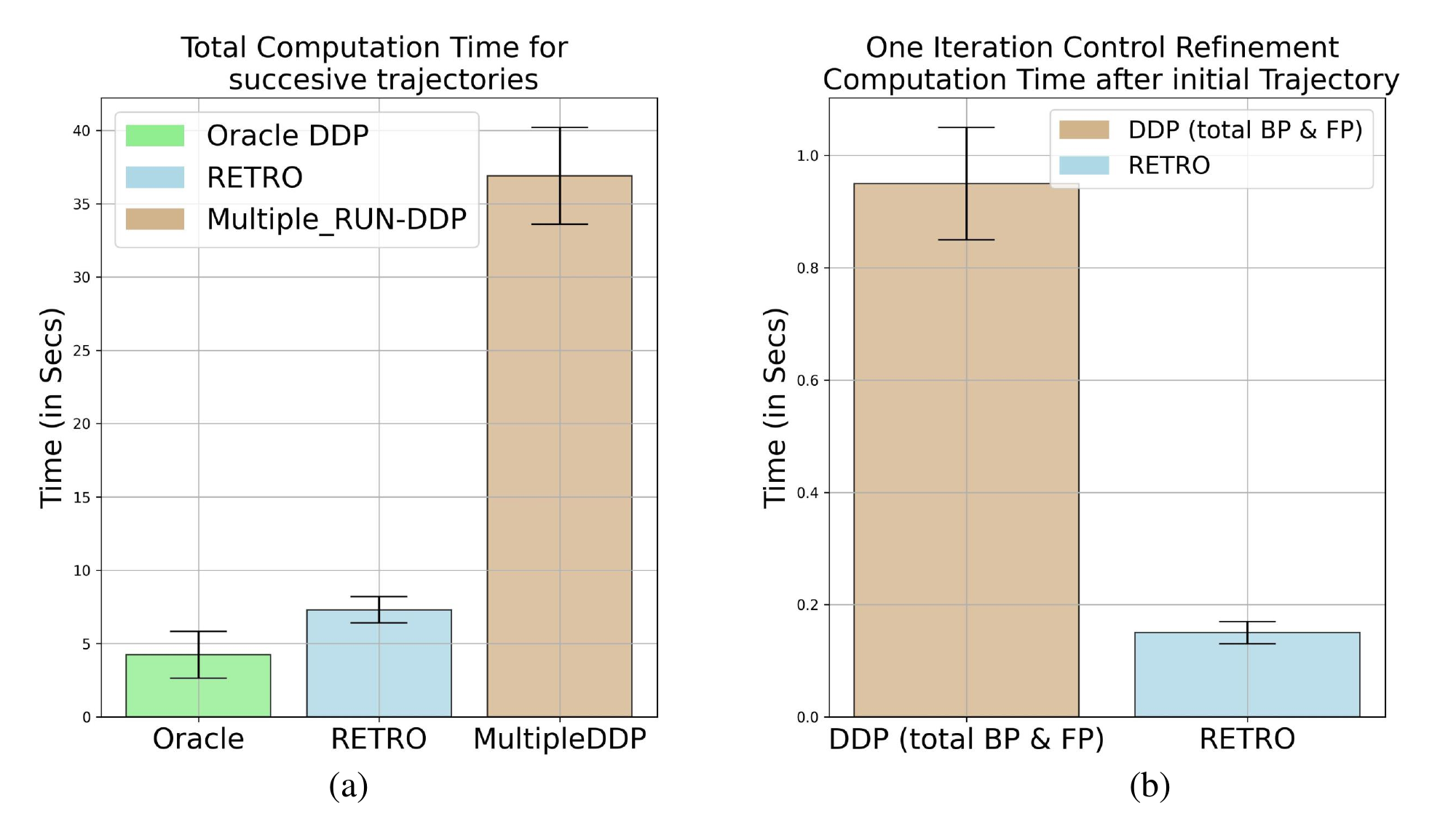}
    \caption{(a) Total Computation Time for Final Control Trajectories against an evolving target location. Total runtime is the highest for Multiple-Run DDP because for each distribution shift the controller need to perform multiple forward pass and backward pass before convergence. (b) Computation time for one single control adjustment iteration after tracking target distribution changes. Computation time for new control sequences is higher for DDP iteration as it computes new control sequences from scratch. In RETRO, we only adjust the pre-computed initial control sequence.}
    \label{fig:comp_time}
\end{figure}

Having established the successful task tracking performance, we next investigated how much computation efficiency the proposed method provides for control adjustment in a dynamic environment. To enable a fair comparison, we chose two baseline studies here: A) We chose an Oracle single-iteration DDP algorithm to which the full trajectory information and the accurate final landing point of the dynamic target are known beforehand. Simply, this Oracle DDP does not need to do any control input adjustment rather it gives an optimal control input sequence after meeting certain preset convergence criteria. B) We chose a computation-heavy multiple-run DDP optimization. The dynamic problem setting is as same as RETRO, but the method of re-optimization for handling non-stationary target locations is very different. For baselines, we completely discard the control sequence computed from a previous DDP run and start the optimization process from scratch for an updated target location. If the distribution shift is larger than a certain preset threshold, we run DDP again to find a new control sequence. In Fig. \ref{fig:comp_time}(a), we compare the computation time required for the mentioned baselines and the proposed method. As expected, the computation time is lowest for the Oracle method and highest for the multiple-run DDP method. Therefore, our method is computationally efficient with respect to the multiple-run DDP baseline since it performs very simple and efficient control input refinement for the already computed sub-optimal control sequence. On the other hand, the performance of our methods is close to the Oracle method which again confirms the superior performance. The above figure data are collected from Drake and ball movement simulations in a smoothly evolving trajectory. 

\subsection{Performance Evaluation for Varying Horizon}
Last, we investigated the effects of different preset horizons $T$ for total regret values and control costs between the RETRO and the Oracle algorithms. Our theoretical analyses assume the smoothness of the distribution shift of the computed KL divergence and envision the trade-off regret performance of control horizons. The KL divergence increases as we have a smaller horizon. Thus, the RETRO adjusted the control input irregularly and the prior sub-optimal control inputs were executed by the robotic system for larger intervals leading to the value of regret increased and larger control error cost differences (see Fig. \ref{fig:Time} left tail). On the contrary, as the horizon increases, the KL divergence becomes smoother leading to the difference in control cost shrinks. Therefore, the RETRO was only required to perform a very tiny control adjustment (see Fig. \ref{fig:Time} right tail). Further, We also observed that as the horizon increases, the total value of regret increases as consistent with our theoretical proof where the term $\text{log}T$ dominated the error upper bound. In between, We observed a relatively flat area where both the regret and control cost differences were small. In practice, how to select optimal times-step $T$ that balances the above trade-off will be a future research direction.   

\begin{figure}[htbp]
\centering
    \includegraphics[width=1\linewidth]{./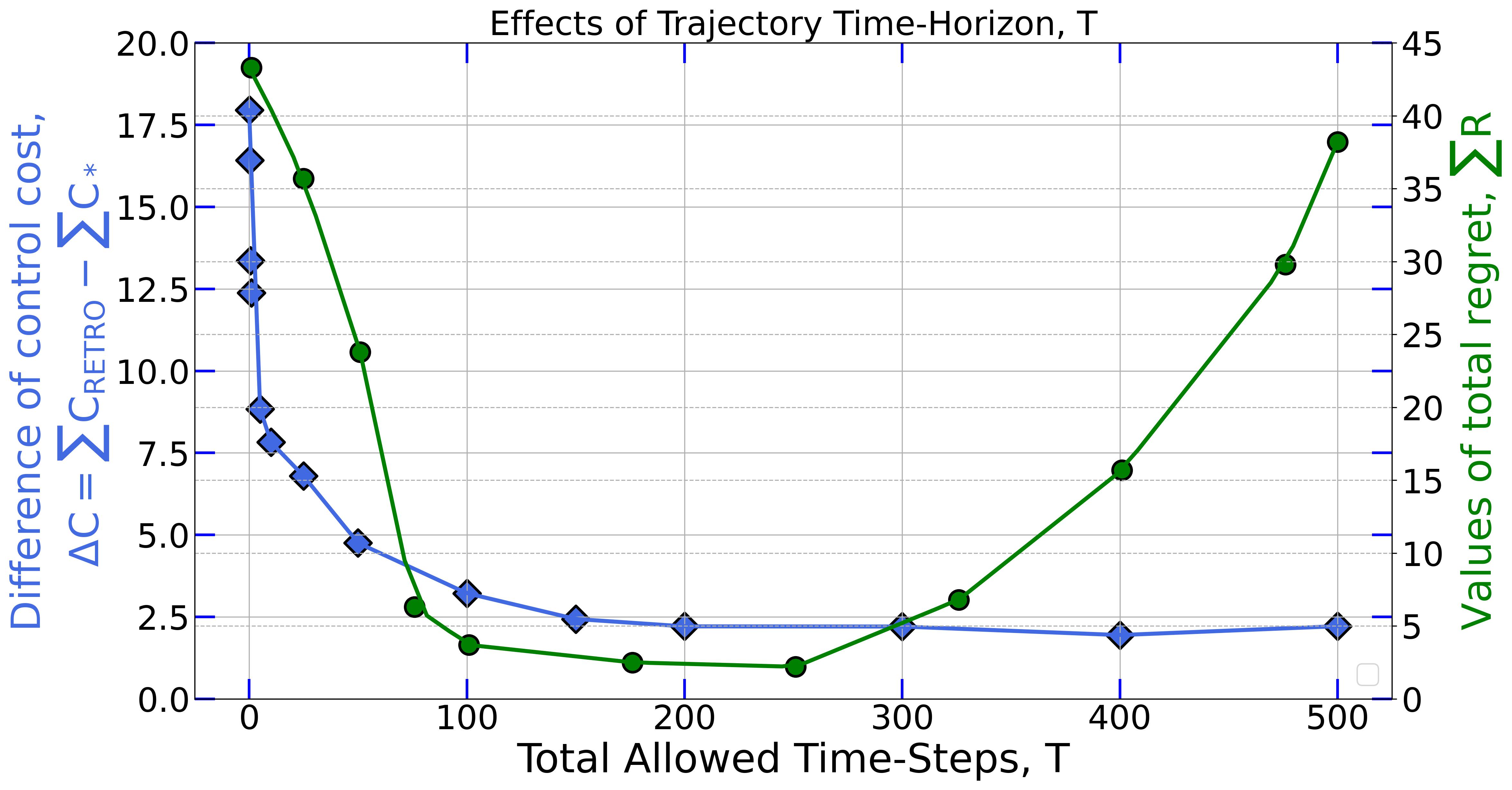}
    \caption{Relationship of planning horizon $T$ with the control cost difference and the total regret value. The control cost difference is in blue and the value of total regret is in green. Notice that the control cost difference becomes very tiny as $T$ increases but does not converge to $0$ as the linearized control refinement cost always exists.}
    \label{fig:Time}
\end{figure}

\section{CONCLUSIONS}
While much of the existing research has predominantly focused on optimal
control with predefined objectives, the emerging problems in robotic
trajectory optimization often involve objectives that evolve dynamically and
motion dynamics that exhibit stochastic behavior. Unfortunately, effectively
addressing such challenges for robot manipulators has proven difficult due to
issues related to high-dimensional trajectory representations and a lack of
consideration for time optimization. In response to these challenges, 
this paper has introduced a novel optimization framework, RETRO,  
to adaptively optimize
trajectories across both spatial and temporal dimensions, resulting in smooth
and optimally timed robot trajectories. Importantly, the RETRO
framework is task-agnostic, allowing it to seamlessly incorporate additional
task-specific requirements, such as collision avoidance, while maintaining
real-time control rates.

\bibliographystyle{IEEEtran}
\bibliography{Bibliography}

\section*{Appenxdices}

\setcounter{equation}{0}
\renewcommand{\theequation}{A.\arabic{equation}}
\paragraph*{\bf Appendix A}
\label{Appendix: Linear in Z}
{\bf Linear in Z}.
We provide detailed derivations for linear solvable value function updates, 
\begin{equation}
\begin{array}{lcl} 
 \delta V(o_t,t) &=& \delta L(t) + \text{log}[\frac{1}{g[z](o_t)}], \\
 -\delta V(o_t,t) &=& -\delta L(t) + \text{log}[g[z](o_t)], \\  
 e^{-\delta V(o_t,t)} &=& e^{-\delta L(t)} \times e^{\text{log}[g[z](o_t)]},\\
 z(o_{t}) &=& e^{-\delta L(t)}g[z](o_t).
\end{array}
\end{equation}
Consider the belief-space trajectories from time-step $1$ to $T$ and define the following matrix,
\begin{equation}
    \label{Transition matrix: P}
 P(O_{1:T}) =
 \begin{bmatrix}
  p(o_1) & p(o_2) & \cdots & \cdots & p(o_T) \\
  0 & p(o_2) & \cdots & \cdots &  p(o_T) \\
  \vdots& \ddots & \cdots & \cdots & \vdots \\
  \vdots& \vdots & \ddots & \cdots & \vdots \\
  \vdots& \vdots& \cdots& \ddots & \vdots \\
  0& 0&\cdots & \cdots& p(o_T)
 \end{bmatrix}
\end{equation} 
The $z(o_{t}) = e^{-\delta L(t)}g[z](o_t)$ can be written as 
\begin{equation}
\label{Linear solvable in z: simplified}
z = Mz,
\end{equation}
where $M=diag(e^{-\delta L(1:T)}) P(O_{1:T})\in \mathbb{R}^{(T\times T)}$.

\setcounter{equation}{0}
\renewcommand{\theequation}{B.\arabic{equation}}
\paragraph*{\bf Appendix B}
\label{Appendix: Upper Bound proof of KL Divergence}
{\bf Upper Bound proof of KL Divergence}.
\begin{theorem}
\label{Theorem: KL divergence of two Gaussian distribution}
Giving two Gaussian distributions $p(x)\sim \mathcal{N}(\mu_1,\,\sigma_{1})$and $q(x) \sim \mathcal{N}(\mu_2,\,\sigma_{2})$, the KL divergence is 
\begin{equation}
\label{KL divergence for two Gaussian}
 D_{KL}(p||q) = \text{log}(\frac{\sigma_2}{\sigma_1}) + \frac{{\sigma_1}^{2} + (\mu_1 - \mu_2)^{2}}{2{\sigma_2}^{2}} -\frac{1}{2}.
\end{equation}
\end{theorem}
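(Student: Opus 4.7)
The plan is to evaluate the KL divergence $D_{KL}(p\|q) = \int p(x)\log\frac{p(x)}{q(x)}\,dx$ directly by substituting the two Gaussian densities and then splitting the integrand into a deterministic constant piece plus two quadratic pieces whose expectations under $p$ are standard. First I would write out the log-ratio in closed form: the normalizing constants of $p$ and $q$ contribute the constant $\log(\sigma_2/\sigma_1)$, while the exponents contribute $-\frac{(x-\mu_1)^2}{2\sigma_1^2} + \frac{(x-\mu_2)^2}{2\sigma_2^2}$. At this point the integral has been reduced to computing two second-moment expectations with respect to $p$.

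Next I would take the expectation $E_{x\sim p}[\cdot]$ of each piece. The constant $\log(\sigma_2/\sigma_1)$ passes through unchanged. For the first quadratic, the identity $E_{x\sim p}[(x-\mu_1)^2] = \sigma_1^2$ is immediate from the definition of variance, so that term contributes exactly $-\frac{1}{2}$. For the second quadratic, I would expand $(x-\mu_2)^2 = \bigl((x-\mu_1) + (\mu_1-\mu_2)\bigr)^2$ and use linearity of expectation together with $E_{x\sim p}[x-\mu_1] = 0$ to obtain $E_{x\sim p}[(x-\mu_2)^2] = \sigma_1^2 + (\mu_1-\mu_2)^2$. Dividing by $2\sigma_2^2$ then gives the middle term of the claimed identity, and collecting the three pieces yields the stated formula exactly.

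Since all constituent facts (the Gaussian density, the definition of KL divergence, and the first two moments of a Gaussian) are classical, there is no deep obstacle; the only real risk is bookkeeping. In particular, I would verify carefully that the $-\frac{1}{2}$ on the right-hand side arises entirely from the $-\frac{(x-\mu_1)^2}{2\sigma_1^2}$ contribution and that no term from the $q$-quadratic cancels it, and that the factor of $2$ in the denominator of the middle term is preserved. I would close the argument by arranging the three pieces in the order displayed in the theorem statement, which confirms the identity.
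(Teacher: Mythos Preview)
Your derivation is correct and is exactly the standard textbook computation: write $\log\frac{p(x)}{q(x)}$ as a constant plus two quadratics, then use $E_{p}[(x-\mu_1)^2]=\sigma_1^2$ and $E_{p}[(x-\mu_2)^2]=\sigma_1^2+(\mu_1-\mu_2)^2$. There is nothing to compare against, because the paper does not actually prove this theorem; it simply quotes the formula as a known result in Appendix~B and then applies it to bound the KL divergence between a specific posterior and prior.
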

Using the above theorem, we provide the following proof as well as provide the intuition of the upper bound $\alpha(\frac{1}{T})$. 

\begin{proof}
Without loss of generality, we assume both posterior and prior distribution to be Gaussian, i.e., $P^\prime(o_t|obs) \sim \mathcal{N}(\mu_1(T),\,\sigma_{1}(T))$and $P(o_t) \sim \mathcal{N}(0,1)$. Notice that the mean and variance of the posterior distribution should converge to the prior distribution as we increase the total time step $T$, i.e., $\lim_{T\to\infty} \mu_1(T) = 0$ and $\lim_{T\to\infty} \sigma_1(T) = 1$. Thus, we can simply model the above observation as $\mu_{1} = \frac{1}{T}$ and $\sigma_{1} = \frac{1}{T} + 1$. Therefore,        
\begin{equation}
\begin{array}{lcl}
     D_{KL}(P^\prime(o_t|obs)|| P(o_t)) &=& log(\frac{1}{\frac{1}{T} + 1}) +\frac{(\frac{1}{T} + 1)^{2}) + \frac{1}{T^{2}}-1}{2}\\
     &= &  \frac{1}{T} + \frac{1}{T^{2}} + log(\frac{T}{T+1}) \\
     &\leq & \frac{1}{T} + \frac{1}{T^{2}} \leq \alpha(\frac{1}{T}).
\end{array}    
\end{equation}
\end{proof}

\setcounter{equation}{0}
\renewcommand{\theequation}{C.\arabic{equation}}
\paragraph*{\bf Appendix C}
\label{Appendix: Expected iterations and operation complexity for running DDP before achieving termination bounds}
{\bf Expected iterations and operation complexity for running DDP before achieving termination bounds}\\
To evaluate the total expected iterations, we state the following theorem about the convergence evidence of the DDP algorithm~\cite{liao1991convergence}.
\begin{theorem}
There exists a constant $c>0$ such that the following inequality holds,
\begin{equation}
    \frac{||U^{J+1} - U^{*}||}{||U^{J} - U^{*}||^{2}} \leq c
\end{equation}
\end{theorem}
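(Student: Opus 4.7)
The plan is to establish this quadratic rate by identifying the DDP backward--forward sweep as a second--order (Newton--type) iteration on the full--horizon cost $J(U)$ from Eq.\eqref{eq:loss}, and then to invoke a standard Kantorovich--style argument. First, I would stack the controls into $U = (u_0^\top,\ldots,u_{T-1}^\top)^\top$. Because the dynamics \eqref{eq:system_dynamics} are $C^2$ and $L(\cdot,\cdot)$ is quadratic in $(x_t,u_t)$, the rollout map $U \mapsto (x_0,\ldots,x_T)$ is $C^2$ and hence so is $J$. The optimum $U^\ast$ satisfies $\nabla J(U^\ast) = 0$, and under the usual DDP regularity assumption that each $Q^{uu}_t$ is uniformly positive definite along the nominal trajectory, the full Hessian $\nabla^2 J(U^\ast)$ is positive definite and admits an inverse bounded by some constant $\kappa$.

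Second, I would show that the backward recursion of Eq.\eqref{DDP optimal control law}, followed by the line--search forward rollout with $\epsilon = 1$ in the asymptotic regime, is algebraically equivalent to the Newton update
\begin{equation}
U^{J+1} = U^J - [\nabla^2 J(U^J)]^{-1}\nabla J(U^J)
\end{equation}
up to an $O(\|U^J - U^\ast\|^2)$ remainder. This is the Liao--Shoemaker equivalence: the Riccati--type factorization hidden in the cost--to--go recursion through $Q^{xx}_t, Q^{xu}_t, Q^{uu}_t$ reproduces the block entries of the reduced Hessian exactly, while the discrepancy comes only from re--linearizing the dynamics about the updated nominal trajectory.

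Third, with the Newton interpretation in hand, I would close the argument by Taylor--expanding the gradient around $U^\ast$,
\begin{equation}
\nabla J(U^J) = \nabla^2 J(U^\ast)(U^J - U^\ast) + R(U^J), \qquad \|R(U^J)\| \leq \tfrac{L}{2}\|U^J - U^\ast\|^2,
\end{equation}
where $L$ is the Lipschitz constant of $\nabla^2 J$ on a bounded neighborhood of $U^\ast$ (finite because $f$ and $L$ are $C^2$ on a compact set). Substituting the expansion into the Newton update, using continuity of $[\nabla^2 J]^{-1}$ near $U^\ast$, and collecting constants yields $\|U^{J+1} - U^\ast\| \leq c\,\|U^J - U^\ast\|^2$ with $c$ of the form $\tfrac{1}{2}L\kappa + o(1)$.

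The main obstacle is the second step, namely verifying the exact algebraic equivalence between DDP's block--wise backward pass and a Newton step on the dense $Tm \times Tm$ Hessian of $J$. DDP never forms the full Hessian, so the equivalence has to be read off from a Schur--complement calculation on the KKT system generated by the dynamic constraints, and one must carefully absorb the line--search damping and the re--linearization error into the second--order remainder. For that bookkeeping I would defer to the block--matrix derivation of \cite{liao1991convergence}; the remaining Kantorovich step is then routine.
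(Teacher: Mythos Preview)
Your sketch is a reasonable outline of the standard quadratic--convergence proof for DDP, and in fact you correctly identify the key ingredient as the Liao--Shoemaker equivalence between the backward--forward sweep and a Newton step on the stacked cost $J(U)$, followed by a routine Kantorovich argument. There is no genuine gap in the approach.

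However, you should be aware that the paper itself does \emph{not} prove this theorem at all. It is stated in Appendix~C purely as a citation to \cite{liao1991convergence}, introduced with ``we state the following theorem about the convergence evidence of the DDP algorithm,'' and then used as a black box to estimate the iteration count for the complexity comparison. So there is no ``paper's own proof'' to compare against: your proposal is essentially a condensed rederivation of the cited result, whereas the paper simply imports it. Both routes ultimately rest on the same source, since you explicitly defer the block--Schur bookkeeping to \cite{liao1991convergence} as well.
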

where $J$ denotes the $J^{th}$ iteration of the DDP algorithm and $U^{*}$ is the optimal control sequence. The above theorem indicates the quadratic convergence rate (similar to Newton's method~\cite{liao1991convergence}) for the DDP algorithm.
\begin{theorem}
Let $\phi\in {C}^{1}_{L}(\mathbb{R})$ (continuously differentiable function with Lipschitz continuity of gradient). Then the error in $\phi$ generated at each step $J$ by the optimization algorithm with quadratic convergence rate will be bounded by $O(\frac{1}{J^{2}})$.
\end{theorem}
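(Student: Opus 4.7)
The plan is to convert the iterate-level quadratic convergence established in the previous theorem into a function-value error rate by means of the Lipschitz gradient assumption. The key observation is that a $C^{1}_{L}$ function can be controlled from above by a quadratic expansion around the optimum, so the gap $\phi(U^{J})-\phi(U^{*})$ is quadratically dominated by the iterate gap $\|U^{J}-U^{*}\|^{2}$, which the previous theorem already forces to shrink extremely fast.

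First, I would invoke the standard descent/upper-bound inequality implied by $\phi\in C^{1}_{L}$: for all $x,y$,
\begin{equation}
\phi(y)\;\leq\;\phi(x)+\nabla\phi(x)^{\top}(y-x)+\tfrac{L}{2}\|y-x\|^{2}.
\end{equation}
Applying this at $x=U^{*}$, where $\nabla\phi(U^{*})=0$ because $U^{*}$ is the minimizer, and $y=U^{J}$, gives the key bound
\begin{equation}
\phi(U^{J})-\phi(U^{*})\;\leq\;\tfrac{L}{2}\,\|U^{J}-U^{*}\|^{2}.
\end{equation}

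Second, I would bound $\|U^{J}-U^{*}\|$ using the previous theorem, which yields the contraction $\|U^{J+1}-U^{*}\|\leq c\|U^{J}-U^{*}\|^{2}$. Once one is in the quadratic regime, i.e. $\rho:=c\|U^{0}-U^{*}\|<1$, iterating gives $\|U^{J}-U^{*}\|\leq \rho^{2^{J}}/c$, a doubly exponential decay. Since this decay is strictly faster than any polynomial, there exists a constant $C$ depending only on $L$, $c$ and the initial error $\|U^{0}-U^{*}\|$ for which $\|U^{J}-U^{*}\|\leq C/J$ for every $J\geq 1$. Plugging back into the first inequality yields $\phi(U^{J})-\phi(U^{*})\leq \tfrac{LC^{2}}{2}\cdot\tfrac{1}{J^{2}}=O(1/J^{2})$, which is the advertised bound.

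The main obstacle is ensuring one actually starts inside the quadratic-convergence regime, namely that $\rho=c\|U^{0}-U^{*}\|<1$, so that the iterated contraction is meaningful. If the initial iterate is too far from $U^{*}$, the inequality $\|U^{J+1}-U^{*}\|\leq c\|U^{J}-U^{*}\|^{2}$ does not immediately contract, and one must argue that a finite burn-in phase (whose length is independent of the final target accuracy, and hence of $J$) brings the iterate into a small neighborhood of $U^{*}$ where the quadratic law takes over. Once that is handled, the rest is a direct substitution, and the $O(1/J^{2})$ bound is in fact very loose relative to the true doubly exponential rate.
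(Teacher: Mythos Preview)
The paper does not actually prove this theorem; it is stated without proof in Appendix~C and then immediately used as a black-box estimate to back out the iteration count. There is therefore nothing to compare your argument against.

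On its own merits your argument is sound. The descent lemma for $C^{1}_{L}$ functions gives $\phi(U^{J})-\phi(U^{*})\leq\tfrac{L}{2}\|U^{J}-U^{*}\|^{2}$ at the minimizer, and the quadratic iterate contraction from the preceding theorem, once $c\|U^{0}-U^{*}\|<1$, forces $\|U^{J}-U^{*}\|$ to decay doubly exponentially, which dominates $1/J$; the $O(1/J^{2})$ function-value bound follows. The burn-in issue you raise is real but, as you note, costs only a finite number of iterations absorbed into the constant.

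One interpretive remark: the combination of the $C^{1}_{L}$ hypothesis with an $O(1/J^{2})$ conclusion is the signature of Nesterov's optimal first-order rate, and the paper may simply be quoting that result while using ``quadratic convergence rate'' loosely to mean an $O(1/J^{2})$ rate rather than Newton-type quadratic convergence. Under that reading the bound would be tight rather than extremely loose. Since the paper gives neither proof nor citation, your reading---tied to the preceding DDP quadratic-convergence theorem---is the more internally coherent one, and your derivation is correct for it.
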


Therefore, if we assume the terminated convergence error is bounded by $O(\frac{1}{n^{2}T^{2}})$ (we do not include the dimension of the control variable because we assume $n\gg m$ before), we can estimate the expected iterations of the DDP algorithm to be
$nT$. Notice that in many applications, the termination condition is even more restrictive than $O(\frac {1}{n^{2}T^{2}})$. For example, researchers usually set the $\text{error} \leq 10^{-8}$, which equals to run $10^4$ iterations to achieve the convergence for a control dynamical system with a state dimension of 10 and discrete time steps of 1000. Empirical experimental results also validated the estimation (see Results section). To this end, one can evaluate the running operation complexity for the multiple-run DDP algorithm as $O(Tn\times Tn^{3}) = O(T^{2}n^{4})$.

\end{document}